\documentclass{article}

\usepackage{iclr2023_conference}
\bibliographystyle{iclr2023_conference}
\usepackage{times}
\usepackage{url}


\usepackage{amsmath,amsfonts,bm}









\def\eqref#1{equation~\ref{#1}}









\def\1{\bm{1}}










\DeclareMathAlphabet{\mathsfit}{\encodingdefault}{\sfdefault}{m}{sl}
\SetMathAlphabet{\mathsfit}{bold}{\encodingdefault}{\sfdefault}{bx}{n}













\DeclareMathOperator*{\argmin}{arg\,min}

\input{preamble.tex}

\usepackage{centernot}
\usepackage{amsthm}
\usepackage{amsfonts}       
\usepackage{nicefrac}       
\usepackage{mathtools}
\usepackage{amsbsy}
\usepackage{amstext}
\usepackage{amsthm}
\usepackage{thmtools}
\usepackage{thm-restate}

\begingroup
    \makeatletter
    \@for\theoremstyle:=definition,remark,plain\do{%
        \expandafter\g@addto@macro\csname th@\theoremstyle\endcsname{%
            \addtolength\thm@preskip\parskip
            }%
        }
\endgroup



\crefname{lemma}{lemma}{lemmas}
\Crefname{lemma}{Lemma}{Lemmas}
\crefname{thm}{theorem}{theorems}
\Crefname{thm}{Theorem}{Theorems}
\crefname{prop}{proposition}{propositions}
\Crefname{prop}{Proposition}{Propositions}
\crefname{assumption}{assumption}{assumptions}
\crefname{assumption}{Assumption}{Assumptions}

\usepackage{booktabs,arydshln}
\makeatletter
\def\adl@drawiv#1#2#3{%
        \hskip.5\tabcolsep
        \xleaders#3{#2.5\@tempdimb #1{1}#2.5\@tempdimb}%
                #2\z@ plus1fil minus1fil\relax
        \hskip.5\tabcolsep}
\newcommand{\cdashlinelr}[1]{%
  \noalign{\vskip\aboverulesep
           \global\let\@dashdrawstore\adl@draw
           \global\let\adl@draw\adl@drawiv}
  \cdashline{#1}
  \noalign{\global\let\adl@draw\@dashdrawstore
           \vskip\belowrulesep}}
\makeatother

\renewcommand{\epsilon}{\varepsilon}


\declaretheorem[style=plain,name=Theorem]{theorem}

\declaretheorem[style=remark,sibling=theorem,name=Remark]{remark}

\newenvironment{example*}
 {\pushQED{\qed}\example}
 {\popQED\endexample}
\numberwithin{equation}{section}


\newcommand{\defeq}{\coloneqq}

\newcommand{\Reals}{\mathbb{R}}




\newcommand{\EE}{\mathbb{E}}
\newcommand{\var}{\mathrm{var}}
\renewcommand{\Pr}{\mathrm{P}}

\newcommand{\given}{\mid}


\newcommand{\cdo}{\mathrm{do}} 


\providecommand\given{} 
\newcommand\SetSymbol[1][]{
  \nonscript\,#1:\nonscript\,\mathopen{}\allowbreak}
\DeclarePairedDelimiterX\Set[1]{\lbrace}{\rbrace}%
{ \renewcommand\given{\SetSymbol[]} #1 }




\crefformat{equation}{(#2#1#3)}
\crefformat{figure}{Figure~#2#1#3}
\crefname{example}{Example}{Examples}
\crefname{lemma}{Lemma}{Lemmas}
\crefname{cor}{Corollary}{Corollaries}
\crefname{theorem}{Theorem}{Theorems}

\usepackage{color, colortbl}
\definecolor{LightCyan}{rgb}{0.88,1,1}

\usepackage{enumitem} 
\usepackage[separate-uncertainty=true,multi-part-units=single]{siunitx} 

\declaretheoremstyle[
spacebelow=\parsep,
    spaceabove=\parsep,
  mdframed={
    backgroundcolor=gray!10!white,     
    hidealllines=true, 
    innertopmargin=8pt, 
    innerbottommargin=4pt, 
    skipabove=8pt,
    skipbelow=10pt,
    nobreak=true
}
]{grayboxed}

\crefname{gassumption}{Assumption}{Assumptions}

\usepackage{thm-restate}





\definecolor{WowColor}{rgb}{.75,0,.75}
\definecolor{SubtleColor}{rgb}{0,0,.50}



\newcounter{margincounter}

\usepackage[affil-it]{authblk}

\title{Causal Estimation for Text Data with (Apparent) Overlap Violations}
\date{}
\author[1]{Lin Gui}
\author[1,2]{Victor Veitch}
\affil[1]{The University of Chicago}
\affil[2]{Google Research}

\iclrfinalcopy 
\begin{document}
\maketitle

\begin{abstract}
  Consider the problem of estimating the causal effect of some attribute of a text document; for example: what effect does writing a polite vs. rude email have on response time? To estimate a causal effect from observational data, we need to adjust for confounding aspects of the text that affect both the treatment and outcome---e.g., the topic or writing level of the text. These confounding aspects are unknown a priori, so it seems natural to adjust for the entirety of the text (e.g., using a transformer). However, causal identification and estimation procedures rely on the assumption of overlap: for all levels of the adjustment variables, there is randomness leftover so that every unit could have (not) received treatment. Since the treatment here is itself an attribute of the text, it is perfectly determined, and overlap is apparently violated. The purpose of this paper is to show how to handle causal identification and obtain robust causal estimation in the presence of apparent overlap violations. In brief, the idea is to use supervised representation learning to produce a data representation that preserves confounding information while eliminating information that is only predictive of the treatment. This representation then suffices for adjustment and satisfies overlap. Adapting results on non-parametric estimation, we find that this procedure is robust to conditional outcome misestimation, yielding a low-absolute-bias estimator with valid uncertainty quantification under weak conditions. Empirical results show strong improvements in bias and uncertainty quantification relative to the natural baseline. Code, demo data and a tutorial are available at \url{https://github.com/gl-ybnbxb/TI-estimator}.
\end{abstract}

  


\section{Introduction}
\label{section:intro}

We consider the problem of estimating the causal effect of an attribute of a passage of text on some downstream outcome. 
For example, what is the effect of writing a polite or rude email on the amount of time it takes to get a response? 
In principle, we might hope to answer such questions with a randomized experiment. 
However, this can be difficult in practice---e.g., if poor outcomes are costly or take long to gather.
Accordingly, in this paper, we will be interested in estimating such effects  using observational data.

There are three steps to estimating causal effects using observational data (See Chapter 36 \cite{pml2Book}). 
First, we need to specify a concrete causal quantity as our estimand. That is, give a formal quantity target of estimation corresponding to the high-level question of interest. 
The next step is causal identification: we need to prove that this causal estimator can, in principle, be estimated using only observational data. 
The standard approach for identification relies on adjusting for confounding variables that affect both the treatment and the outcome. For identification to hold, our adjustment variables must satisfy two conditions: unconfoundedness and overlap. The former requires the adjustment variables contain sufficient information on all common causes. 
The latter requires that the adjustment variable does not contain enough information about treatment assignment to let us perfectly predict it. 
Intuitively, to disentangle the effect of treatment from the effect of confounding, we must observe each treatment state at all levels of confounding.
The final step is estimation using a finite data sample. Here, overlap also turns out to be critically important as a major determinant of the best possible accuracy (asymptotic variance) of the estimator \cite{chernozhukov2016double}. 

Since the treatment is a linguistic property, it is often reasonable to assume that text data has information about all common causes of the treatment and the outcome. 
Thus, we may aim to satisfy unconfoundedness in the text setting by adjusting for all the text as the confounding part. However, doing so brings about overlap violation. Since the treatment is a linguistic property determined by the text, the probability of treatment given any text is either 0 or 1. The polite/rude tone is determined by the text itself. Therefore, overlap does not hold if we naively adjust for all the text as the confounding part.
This problem is the main subject of this paper. Or, more precisely, our goal is to find a causal estimand, causal identification conditions, and a robust estimation procedure that will allow us to effectively estimate causal effects even in the presence of such (apparent) overlap violations.

In fact, there is an obvious first approach: simply use a standard plug-in estimation procedure that relies only on modeling the outcome from the text and treatment variables. In particular, do not make any explicit use of the propensity score, the probability each unit is treated.
\Citet{pryzant2020causal} use an approach of this kind and show it is reasonable in some situations. Indeed, we will see in \Cref{section:causal_estimand,section:method} that this procedure can be interpreted as a point estimator of a controlled causal effect. 
Even once we understand what the implied causal estimand is, this approach has a major drawback: the estimator is only accurate when the text-outcome model converges at a very fast rate. This is particularly an issue in the text setting, where we would like to use large, flexible, deep learning models for this relationship. In practice, we find that this procedure works poorly: the estimator has significant absolute bias and (the natural approach to) uncertainty quantification almost never includes the estimand true value; see \Cref{section:experiments}.

The contribution of this paper is a method for robustly estimating causal effects in text.
The main idea is to break estimation into a two-stage procedure, where in the first stage we learn a representation of the text that preserves enough information to account for confounding, but throws away enough information to avoid overlap issues. 
Then, we use this representation as the adjustment variables in a standard double machine-learning estimation procedure \cite{chernozhukov2016double,Chernozhukov:Chetverikov:Demirer:Duflo:Hansen:Newey:2017}.
To establish this method, the contributions of this paper are:
\begin{enumerate}
    \item We give a formal causal estimand corresponding to the text-attribute question. We show this estimand is causally identified under weak conditions, even in the presence of apparent overlap issues.
    \item We show how to efficiently estimate this quantity using the adapted double-ML technique just described. We show that this estimator admits a central limit theorem at a fast ($\sqrt{n}$) rate under weak conditions on the rate at which the ML model learns the text-outcome relationship (namely, convergence at $n^{1/4}$ rate). This implies absolute bias decreases rapidly, and an (asymptotically) valid procedure for uncertainty quantification. 
    \item We test the performance of this procedure empirically, finding significant improvements in bias and uncertainty quantification relative to the outcome-model-only baseline.
\end{enumerate}

\paragraph{Related work}
The most related literature is on causal inference with text variables. Papers include treating text as treatment \cite{pryzant2020causal, wood2018challenges,egami2018make,fong2016discovery,wang2019words,tan2014effect}), as outcome \cite{egami2018make,sridhar2019estimating}, as confounder \cite{Veitch:Sridhar:Blei:2019,roberts2020adjusting,mozer2020matching,keith2020text}, and discovering or predicting causality from text \cite{del2016case,tabari2018causality,balashankar2019identifying,mani2000causal}. There are also numerous applications using text to adjust for confounding \citep[e.g.,][]{olteanu2017distilling,hall2017hiring,kiciman2018using,sridhar2018estimating,sridhar2019estimating,saha2019social,karell2019rhetorics,zhang2020quantifying}.
Of these, \citet{pryzant2020causal} also address non-parametric estimation of the causal effect of text attributes. 
Their focus is primarily on mismeasurement of the treatments, while our motivation is robust estimation.

This paper also relates to work on causal estimation with (near) overlap violations. 
\Citet{d2021overlap} points out high-dimensional adjustment \citep[e.g.,][]{rassen2011covariate,louizos2017causal,li2016matching,athey2017estimating} suffers from overlap issues. Extra assumptions such as sparsity are often needed to meet the overlap condition. These results do not directly apply here because we assume there exists a low-dimensional summary that suffices to handle confounding.

\Citet{d2021deconfounding} studies summary statistics that suffice for identification, which they call deconfounding scores. The supervised representation learning approach in this paper can be viewed as an extremal case of the deconfounding score. However, they consider the case where ordinary overlap holds with all observed features, with the aim of using both the outcome model and propensity score to find efficient statistical estimation procedures (in a linear-gaussian setting). This does not make sense in the setting we consider. Additionally, our main statistical result (robustness to outcome model estimation) is new.


\section{Notation and Problem Setup}\label{section:setup}


\begin{wrapfigure}{r}{0.3\textwidth}
  \begin{center}
    \includegraphics[width=0.3\textwidth]{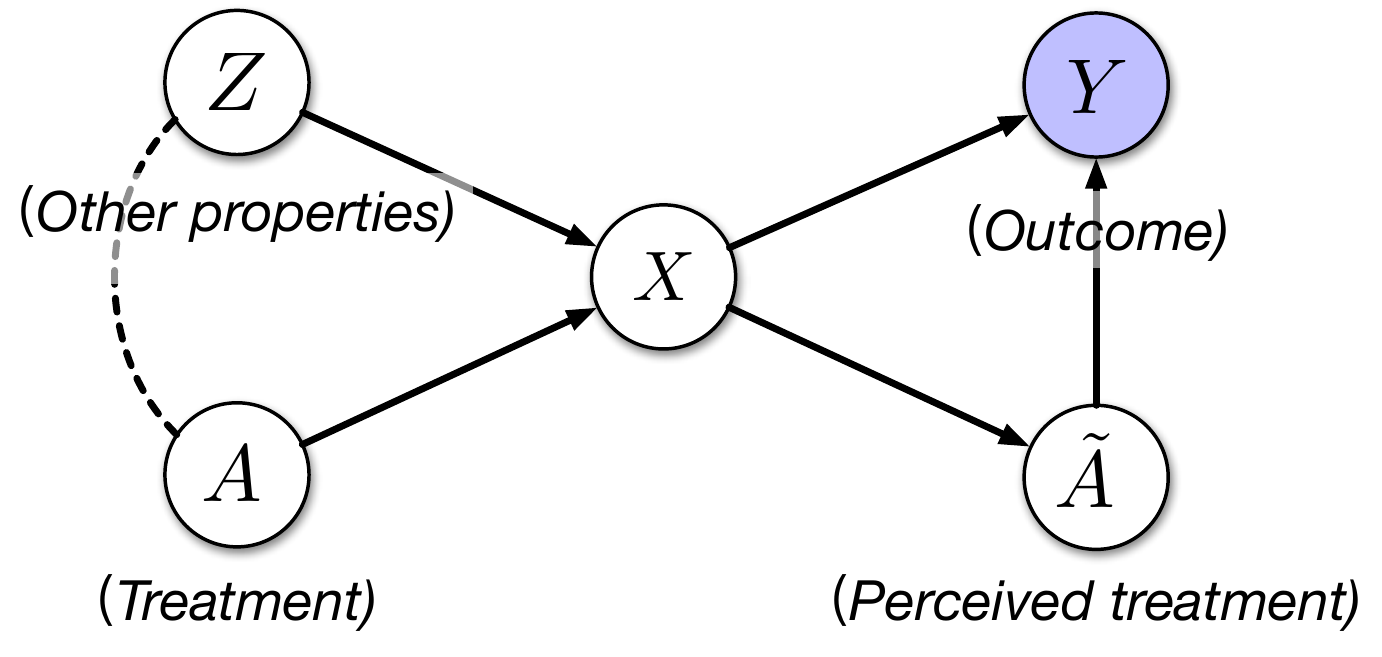} 
  \end{center}
  \caption{The causal DAG of the problem. A writer writes a text document $X$ based on linguistic properties $A$ and $Z$, where $A$ is the treatment in the causal problem. $A$ and $Z$ cannot be observed directly in data and can only be seen via text. The dotted line represents possible correlation between $A$ and $Z$. A reader perceives the treatment $\tilde{A}$ from the text. The perceived treatment $\tilde{A}$ together with contents of $X$ determine the outcome $Y$.}
    \label{fig:causal_DAG}
\end{wrapfigure}

We follow the causal setup of \citet{pryzant2020causal}.
We are interested in estimating the causal effect of treatment $A$ on outcome $Y$. 
For example, how does writing a negative sentiment ($A$) review ($X$) affect product sales ($Y$)? 
There are two immediate challenges to estimating such effects with observed text data.
First, we do not actually observe $A$, which is the intent of the writer. Instead, we only observe $\tilde{A}$, a version of $A$ that is inferred from the text itself. 
In this paper, we will assume that $A=\tilde{A}$ almost surely---e.g., a reader can always tell if a review was meant to be negative or positive. This assumption is often reasonable, and follows \cite{pryzant2020causal}. 
The next challenge is that the treatment may be correlated with other aspects of the text ($Z$) that are also relevant to the outcome---e.g., the product category of the item being reviewed. 
Such $Z$ can act as confounding variables, and must somehow be adjusted for in a causal estimation problem.
 
Each unit $(A_i, Z_i,X_i,Y_i)$ is drawn independently and identically from an unknown distribution $P$. \Cref{fig:causal_DAG} shows the causal relationships among variables, where solid arrows represent causal relations, and the dotted line represents possible correlations between two variables. We assume that text $X$ contains all common causes of $\tilde{A}$ and the outcome $Y$.


\section{Identification and Causal estimand}\label{section:causal_estimand}

The first task is to translate the qualitative causal question of interest---what is the effect of $A$ on $Y$---into a causal estimand. This estimand must both be faithful to the qualitative question and be identifiable from observational data under reasonable assumptions.
The key challenges here are that we only observe $\tilde{A}$ (not $A$ itself), there are unknown confounding variables influencing the text, and $\tilde{A}$ is a deterministic function of the text, leading to overlap violations if we naively adjust for all the text. 
Our high-level idea is to split the text into abstract (unknown) parts depending on whether they are confounding---affect both $\tilde{A}$ and $Y$---or whether they affect $\tilde{A}$ alone. The part of the text that affects only $\tilde{A}$ is not necessary for causal adjustment, and can be thrown away. If this part contains ``enough'' information about $\tilde{A}$, then throwing it away can eliminate our ability to perfectly predict $\tilde{A}$, thus fixing the overlap issue. We now turn to formalizing this idea, showing how it can be used to define an estimand and to identify this estimand from observational data.

\paragraph{Causal model}
The first idea is to decompose the text into three parts: one part affected by only $A$, one part affected interactively by $A$ and $Z$, and another part affected only by $Z$. We use $X_A$, $X_{A \land Z}$ and ${X_Z}$ to denote them, respectively; see \Cref{fig:causal_DAG_decomposition} for the corresponding causal model. Note that there could be additional information in the text in addition to these three parts. However, since they are irrelevant to both $A$ and $Z$, we do not need to consider them in the model.  


\begin{wrapfigure}{r}{0.3\textwidth}
  \centering
    \includegraphics[width=0.3\textwidth]{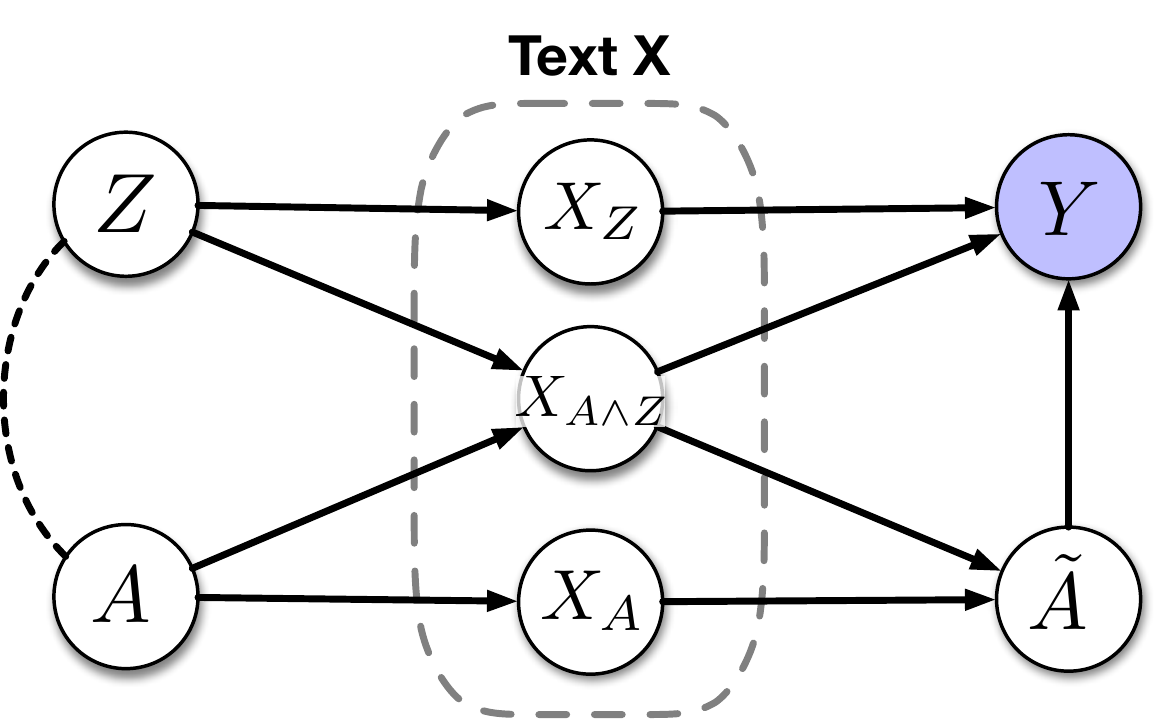} 
    \caption{A more sophisticated causal model with the decomposition of text $X$. $X_A$, $X_{A\land Z}$, and $X_Z$ are parts of the text affected by only $A$, both $A$ and $Z$, and only $Z$, respectively. $A$ and $Z$ are linguistic properties a writer based on and thus cannot be observed directly from data. When investigating the causal relationship between $\tilde{A}$ and $Y$, $\left(X_{A\land Z},X_{Z}\right)$ is a confounding part satisfying both unconfoundedness and overlap.}
    \label{fig:causal_DAG_decomposition}
    \vspace{-23pt}
\end{wrapfigure}

\paragraph{Controlled direct effect (CDE)}
The treatment $A$ affects the outcome through two paths. Both ``directly'' through $X_A$---the part of the text determined just by the treatment---and also through a path going through $X_{A \land Z}$---the part of the text that relies on interaction effects with other factors.
Our formal causal effect aims at capturing the effect of $A$ through only the first, direct, path.
\vspace{-1mm}
\begin{align}
 \mathrm{CDE}\defeq\EE_{X_{A\land Z},X_Z|A=1}\Big[&\EE[Y \given\ X_{A\land Z}, X_Z, \cdo(A=1)]\notag\\
 -&\EE[Y \given\ X_{A\land Z}, X_Z, \cdo(A=0)]\Big].
 \label{eq:CDE_text}
\end{align}
Here, $\cdo$ is Pearl's do notation, and the estimand is a variant of the controlled direct effect \citep{Pearl:2009a}.
Intuitively, it can be interpreted as the expected change in the outcome induced by changing the treatment from 1 to 0 while keeping part of the text affected by $Z$ the same as it would have been had we set $A=1$.
This is a reasonable formalization of the qualitative ``effect of $A$ on $Y$''. Of course, it is not the only possible formalization. Its advantage is that, as we will see, it can be identified and estimated under reasonable conditions.


\paragraph{Identification}
To identify $\mathrm{CDE}$ we must rewrite the expression in terms of observable quantities.
There are three challenges: we need to get rid of the $\cdo$ operator, we don't observe $A$ (only $\tilde{A}$), and the variables $X_{A\land Z},X_Z$ are unknown (they are latent parts of $X$).

Informally, the identification argument is as follows.
First, $X_{A\land Z},X_Z$ block all backdoor paths (common causes) in \Cref{fig:causal_DAG_decomposition}. Moreover, because we have thrown away $X_A$, we now satisfy overlap. Accordingly, the $\cdo$ operator can be replaced by conditioning following the usual causal-adjustment argument.
Next, $A=\tilde{A}$ almost surely, so we can just replace $A$ with $\tilde{A}$.
Now, our estimand has been reduced to:
\begin{equation}\label{eq:modified-CDE}
\tilde{\mathrm{CDE}}\defeq\EE_{X_{A\land Z},X_Z|\tilde{A}=1}\left[\EE[Y \given\ X_{A\land Z}, X_Z, \tilde{A}=1]-\EE[Y \given\ X_{A\land Z}, X_Z, \tilde{A}=0)]\right].
\end{equation}
The final step is to deal with the unknown $X_{A\land Z},X_Z$. 
To fix this issue, we first define the \emph{conditional outcome} $Q$ according to:
\begin{equation}\label{eq:conditional-outcome}
    Q(\tilde{A},X) \defeq \EE(Y \given\ \tilde{A},X).
\end{equation}
A key insight here is that, subject to the causal model in \Cref{fig:causal_DAG_decomposition}, we have $Q(\tilde{A},X) = \EE(Y \given \tilde{A}, X_{A\land Z},X_Z)$. But this is exactly the quantity in \Cref{eq:modified-CDE}. Moreover, $Q(\tilde{A},X)$ is an observable data quantity (it depends only on the distribution of the observed quantities). In summary:
\vspace{-3mm}\begin{restatable}{theorem}{identification}
\label{thm:identification}
Assume the following: \\1. (Causal structure) The causal relationships among $A$, $\tilde{A}$, $Z$, $Y$, and $X$ satisfy the causal DAG in \Cref{fig:causal_DAG_decomposition}; \\2. (Overlap) $0<\Pr(A=1\given X_{A\land Z},X_Z)<1$; \\3. (Intention equals perception) $A=\tilde{A}$ almost surely with respect to all interventional distributions.
Then, the CDE is identified from observational data as
\begin{equation}
\label{eq:CDE_final}
\mathrm{CDE}=\tau^\mathrm{CDE}:=\EE_{X |\tilde{A}=1}\left[\EE[Y \given \eta(X), \tilde{A}=1] - \EE[Y \given \eta(X), \tilde{A}=0] \right],
\end{equation}
where $\eta(X)\defeq\left(Q(0,X),Q(1,X)\right)$.
\end{restatable}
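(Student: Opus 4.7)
The plan is to carry out the three-step reduction sketched informally in the text, and then to add one measurability argument that rewrites the adjustment in terms of $\eta(X)$. So the proof has four logical moves: backdoor adjustment, substitution $A = \tilde{A}$, d-separation to identify $Q$ with $\EE[Y \mid X_{A\land Z}, X_Z, \tilde{A}]$, and a tower-property step to pass from $(X_{A\land Z}, X_Z)$ to $\eta(X)$.

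First, I would use the DAG in \Cref{fig:causal_DAG_decomposition} to check that $(X_{A\land Z}, X_Z)$ satisfies the backdoor criterion for the effect of $A$ on $Y$: every noncausal path from $A$ to $Y$ passes through $Z$, and $Z$ influences $Y$ only via $X_{A\land Z}$ or $X_Z$, so the pair blocks all backdoor paths. Combined with the overlap assumption, this lets me replace $\cdo(A=a)$ inside the outer expectation by conditioning on $A=a$, via the standard backdoor adjustment formula. Assumption~3 ($A = \tilde{A}$ almost surely, both observationally and interventionally) then lets me swap $A$ for $\tilde{A}$, reducing $\mathrm{CDE}$ to $\tilde{\mathrm{CDE}}$ as in \Cref{eq:modified-CDE}.

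Second, I would prove $\EE[Y \mid X_{A\land Z}, X_Z, \tilde{A}] = Q(\tilde{A}, X)$. The DAG implies that, given $(\tilde{A}, X_{A\land Z}, X_Z)$, the remaining components of $X$ (namely $X_A$ and any content irrelevant to both $A$ and $Z$) are d-separated from $Y$: $X_A$ is a deterministic function of $A = \tilde{A}$ with no other incoming edges, so conditioning on $\tilde{A}$ already captures whatever it carries. Hence conditioning on the full $X$ adds no information over $(X_{A\land Z}, X_Z, \tilde{A})$, and the two conditional expectations agree. Then, since $Q(a, X)$ is by construction the $a$-th coordinate of $\eta(X)$ and is therefore $\eta(X)$-measurable, the tower property gives
\begin{equation*}
\EE[Y \mid \eta(X), \tilde{A}=a] \;=\; \EE\bigl[\,\EE[Y \mid X, \tilde{A}=a]\,\big|\,\eta(X), \tilde{A}=a\bigr] \;=\; \EE\bigl[Q(a,X)\,\big|\,\eta(X), \tilde{A}=a\bigr] \;=\; Q(a,X).
\end{equation*}
Finally, because $Q(a, X)$ is a function of $(X_{A\land Z}, X_Z)$ alone (by the d-separation step), iterated expectation collapses the outer marginalization in $\tilde{\mathrm{CDE}}$ over $(X_{A\land Z}, X_Z) \mid \tilde{A}=1$ to the marginalization over $X \mid \tilde{A}=1$, yielding $\tau^{\mathrm{CDE}}$ as in \Cref{eq:CDE_final}.

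The main obstacle is the d-separation step: it requires a careful interpretation of ``parts of $X$'' as an actual measurable decomposition compatible with the DAG, and a check that no path from $Y$ routes back into $X_A$ or any residual component of $X$ once $(X_{A\land Z}, X_Z, \tilde{A})$ is conditioned on. The other moves are essentially bookkeeping on top of the backdoor criterion and the observation that $Q(a, \cdot)$ is by definition a coordinate projection of $\eta$, so all the nontrivial causal content lives in verifying this conditional independence inside the assumed DAG.
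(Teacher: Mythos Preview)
Your proposal is correct and follows essentially the same route as the paper's proof: backdoor adjustment via $(X_{A\land Z},X_Z)$, substitution $A=\tilde A$, the d-separation argument that $Q(\tilde A,X)=\EE[Y\mid \tilde A,X_{A\land Z},X_Z]$, and then the measurability/tower step to pass to $\eta(X)$. The only addition in the paper's version is an explicit verification that $\eta(X)$ itself satisfies overlap, i.e.\ $0<\Pr(\tilde A=1\mid \eta(X))<1$, obtained by writing this propensity as $\EE[\Pr(A=1\mid X_{A\land Z},X_Z)\mid \eta(X)]$ and invoking Assumption~2; you should include this one-line check so that the conditional expectations in \Cref{eq:CDE_final} are well-defined on the relevant support.
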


The proof is in \Cref{sec:proof_identification}.

We give the result in terms of an abstract sufficient statistic $\eta(X)$ to emphasize that the actual conditional expectation model is not required, only some statistic that is informationally equivalent. We emphasize that, regardless of whether the overlap condition holds or not, the propensity score of $\eta(X)$ is accessible and meaningful. Therefore, we can easily identify when identification fails as long as $\eta(X)$ is well-estimated.

\section{Method}\label{section:method}

Our ultimate goal is to draw a conclusion about whether the treatment has a causal effect on the outcome. Following the previous section, we have reduced this problem to estimating $\tau^\mathrm{CDE}$, defined in \Cref{thm:identification}.
The task now is to develop an estimation procedure, including uncertainty quantification. 

\subsection{Outcome only estimator}
\label{section:outcome_only_est}
We start by introducing the naive outcome only estimator as a first approach to CDE estimation. The estimator is adapted from \citet{pryzant2020causal}. 
The observation here is that, taking $\eta(X)=\left(Q(0,X),Q(1,X)\right)$ in \Cref{eq:CDE_final}, we have
\begin{equation}
 \tau^\mathrm{CDE}=\EE_{X|A=1}\left[\ \EE(Y \given A=1,X)-\EE(Y \given A=0,X)\ \right].
\end{equation}
Since $Q(A,X)$ is a function of the whole text data $X$, it is estimable from observational data. 
Namely, it is the solution to the square error risk:
\begin{equation}
    Q = \argmin_{\tilde{Q}} \EE[(Y - \tilde{Q}(A,X))^2].
\end{equation}
With a finite sample, we can estimate $Q$ as $\hat{Q}$ by fitting a machine-learning model to minimize the (possibly regularized) square error empirical risk. That is, fit a model using mean square error as the objective function. 
Then, a straightforward estimator is:
\begin{equation}
    \hat{\tau}^\text{Q} \defeq \frac{1}{n_1}\sum_{i:A_i=1} \hat{Q}_1(X_i) - \hat{Q}_0(X_i),
\end{equation}
where $n_1$ is the number of treated units.

It should be noted that the model for $Q$ is not arbitrary. One significant issue for those models which directly regress $Y$ on $A$ and $X$ is when overlap does not hold, the model could ignore $A$ and only use $X$ as the covariate. As a result, we need to choose a class of models that force the use of the treatment $A$. To address this, we use a two-headed model that regress $Y$ on $X$ for $A=0/1$ separately in the conditional outcome learning model (See \Cref{section:ti_est} and \Cref{fig:qnet}).

As discussed in the introduction \Cref{section:intro}, this estimator yields a consistent point estimate, but does not offer a simple approach for uncertainty quantification. A natural guess for an estimate of its variance is:
\begin{equation}
    \hat{\var}(\hat{\tau}^\text{Q}) \defeq \frac{1}{n}\hat{\var}(\hat{Q}_1(X_i) - \hat{Q}_0(X_i)\given \hat{Q}).
\end{equation}
That is, just compute the variance of the mean conditional on the fitted model. 
However, this procedure yields asymptotically valid confidence intervals only if the outcome model converges extremely quickly; i.e., if $\EE[(\hat{Q}-Q)^2]^{\frac{1}{2}} = o(n^{-\frac{1}{2}})$.
We could instead bootstrap, refitting $\hat{Q}$ on each bootstrap sample. However, with modern language models, this can be prohibitively computationally expensive.

\subsection{Treatment Ignorant Effect Estimation (TI-estimator)}
\label{section:ti_est}

Following \Cref{thm:identification}, it suffices to adjust for $\eta(X)=\left(Q(0,X),Q(1,X)\right)$. 
Accordingly, we use the following pipeline.
We first estimate $\hat{Q}_0(X)$ and $\hat{Q}_1(X)$ (using a neural language model), as with the outcome-only estimator. 
Then, we take $\hat{\eta}(X) \defeq (\hat{Q}_0(X),\hat{Q}_1(X))$ and estimate $\hat{g}_\eta \approx \Pr(A=1 \given \hat{\eta})$. That is, we estimate the propensity score corresponding to the estimated representation. 
Finally, we plug the estimated $\hat{Q}$ and $\hat{g}_\eta$ into a standard double machine learning estimator \citep{chernozhukov2016double}. 

We describe the three steps in detail.

\paragraph{Q-Net}

In the first stage, we estimate the conditional outcomes and hence obtain the estimated two-dimensional confounding vector $\hat{\eta}(X)$. 
For concreteness, we will use the dragonnet architecture of \citet{Shi:Blei:Veitch:2019}. 
Specifically, we train DistilBERT \citep{sanh2019distilbert} modified to include three heads, as shown in \Cref{fig:qnet}. Two of the heads correspond to $\hat{Q}_0(X)$ and $\hat{Q}_1(X)$ respectively. As discussed in the \Cref{section:outcome_only_est}, applying two heads can force the model to use the treatment $A$. The final head is a single linear layer predicting the treatment. This propensity score prediction head can help prevent (implicit) regularization of the model from throwing away $X_{A \wedge Z}$ information that is necessary for identification. The output of this head is not used for the estimation since its purpose is to force the DistilBERT representation to preserve all confounding information. This has been shown to improve causal estimation \citep{Shi:Blei:Veitch:2019, Veitch:Sridhar:Blei:2019}.

We train the model by minimizing the objective function 
\begin{equation}
\mathcal{L}(\theta;\textbf{X})=\frac{1}{n}\sum_i\left[\left( \hat{Q}_{a_i}(x_i;\theta)-y_i\right)^2+\alpha\text{CrossEntropy}\left(a_i,g_u(x_i)\right)+\beta\mathcal{L}_\text{mlm}(x_i)\right],
\end{equation}
where $\theta$ are the model parameters, $\alpha$, $\beta$ are hyperparameters and $\mathcal{L}_\text{mlm}(\cdot)$ is the masked language modeling objective of DistilBERT.

\begin{wrapfigure}{r}{0.5\textwidth}
  \begin{center}
    \includegraphics[width=0.5\textwidth]{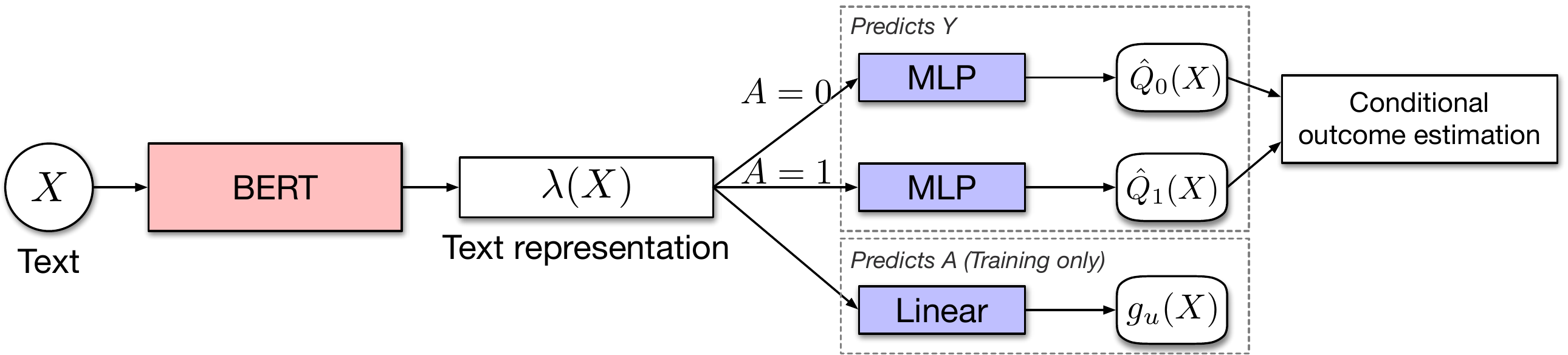}
  \end{center}
  \caption{The architecture of Q-Net follows the dragonnet \citep{Shi:Blei:Veitch:2019} for estimation of $\hat{Q}$. Specifically, given representations $\lambda(X)$ from input text data, the Q-Net predicts $Y$ for samples with $A=0$ and $A=1$ using two separate heads. A third head predicting $A$ is also included for training, though the predictions are not used for estimation. Parameters in DistilBERT and three prediction heads are trained together in an end-to-end manner.}
    \vspace{-20pt}
\label{fig:qnet}
\end{wrapfigure}

There is a final nuance. In practice, we split the data into $K$-folds.
For each fold $j$, we train a model $\hat{Q}_{-j}$ on the other $K-1$ folds. Then, we make predictions for the data points in fold $j$ using $\hat{Q}_{-j}$. Slightly abusing notation, we use $\hat{Q}_a(x)$ to denote the predictions obtained in this manner.

\paragraph{Propensity score estimation}
Next, we define $\hat{\eta}(x) \defeq (\hat{Q}_0(x),\hat{Q}_1(x))$ and estimate the propensity score $\hat{g}_\eta(x) \approx \Pr(A=1 \given \hat{\eta}(x))$. To do this, we fit a non-parametric estimator to the binary classification task of predicting $A$ from $\hat{\eta}(X)$ in a cross fitting or K-fold fashion.
The important insight here is that since $\hat{\eta}(X)$ is 2-dimensional, non-parametric estimation is possible at a fast rate. In \Cref{section:experiments}, we try several methods and find that kernel regression usually works well.

We also define $g_\eta(X) \defeq \Pr(A=1\given \eta(X))$ as the idealized propensity score. The idea is that as $\hat{\eta} \to \eta$, we will also have $\hat{g}_\eta \to g_\eta$ so long as we have a valid non-parametric estimate.


\paragraph{CDE estimation}
The final stage is to combine the estimated outcome model and propensity score into a $\mathrm{CDE}$ estimator. To that end, we define the influence curve of $\tau^\mathrm{CDE}$ as follows:
\begin{equation}
\phi(X;Q,g_\eta,\tau^\mathrm{CDE})\defeq \frac{A\cdot\left(Y-Q(0,X)\right)}{p}-\frac{g_\eta(X)}{p\left(1-g_\eta(X)\right)}\cdot(1-A)\cdot\left(Y-Q(0,X)\right)-A\tau^\mathrm{CDE},
\end{equation}
where $p=\Pr\left(A=1\right)$.
Then, the standard double machine learning estimator of $\tau^\mathrm{CDE}$ \cite{chernozhukov2016double}, and the $\alpha$-level confidence interval of this estimator, is given by 
\begin{equation}
    \hat{\tau}^\text{TI} = \frac{1}{n}\sum_{i=1}^n\hat\phi_i,\  CI^\text{TI}=\left(\hat{\tau}^\text{TI}-z_{1-\alpha/2}\hat{sd}(\hat\phi_i-A_i\cdot\hat{\tau}^\text{TI}/\hat{p}),\ \hat{\tau}^\text{TI}+z_{1-\alpha/2}\hat{sd}(\hat\phi_i-A_i\cdot\hat{\tau}^\text{TI}/\hat{p})\right),
\end{equation}
where 
\begin{equation}
    \hat{\phi}_i=\frac{A_i\cdot\left(Y_i-\hat{Q}_0(X_i)\right)}{\hat{p}}-\frac{\hat{g}_\eta(X_i)}{\hat{p}\left(1-\hat{g}_\eta(X_i)\right)}\cdot(1-A_i)\cdot\left(Y_i-\hat{Q}_0(X_i)\right),\ \ i=1,\cdots,n,
\end{equation}
$\hat{p} = \frac{1}{n}\sum_{i=1}^nA_i$, $z_{1-\alpha/2}$ is the $\alpha/2$-upper quantile of the standard normal, and $\hat{sd}(\cdot)$ is the sample standard deviation.



\paragraph{Validity}
We now have an estimation procedure. 
It remains to give conditions under which this procedure is valid. In particular, we require that it should yield a consistent estimate and asymptotically correct confidence intervals.
\begin{restatable}{theorem}{consistent}
\label{thm:consistent}
Assume the following. 
\begin{enumerate}
    \item The mis-estimation of conditional outcomes can be bounded as follows
    \begin{equation}
    \max_{a\in \{0,1\}}\EE[(\hat{Q}_a(X)-Q(a,X))^2]^{\frac{1}{2}} = o(n^{-\frac{1}{4}}).
    \end{equation}
    \item The propensity score function $P(A=1|\cdot,\cdot)$ is Lipschitz continuous on $\Reals^2$, and $\exists\ \epsilon>0$, $\Pr\left(\epsilon\le g_\eta(X)\le1-\epsilon\right)=1$
    \item The propensity score estimate converges at least as quickly as k nearest neighbor; \\
    i.e., $\EE[\left(\hat{g}_\eta(X)-\Pr(A=1 \given \hat{\eta}(X)\right)^2\given \hat\eta(X)]^{\frac{1}{2}} = O(n^{-\frac{1}{4}})$ \cite{gyorfi2002distribution}; 
    \item There exist positive constants $C_1$, $C_2$, $c$, and $q>2$ such that
    \begin{equation*}
    \begin{split}
    &\EE[|Y|^q]^\frac{1}{q}\le C_2,\ \sup_{\eta\in\text{supp}(\eta(X))}\EE[(Y-Q(A,X)^2\given \eta(X)=\eta)]\le C_2,\\
    &\EE[(Y-Q(A,X)^2)]^\frac{1}{2}\ge c,\ \max_{a\in\{0,1\}}\EE[\left|\hat{Q}_a(X)-Q(a,X)\right|]^\frac{1}{q}\le C_1.
    \end{split}
    \end{equation*}
\end{enumerate}
Then, the estimator $\hat{\tau}^\text{TI}$ is consistent and
\begin{equation}
    \sqrt{n}(\hat{\tau}^\text{TI}-\tau^\mathrm{CDE})\overset{d}{\to}\mathbb{N}(0,\sigma^2)
\end{equation}
where $\sigma^2=E\left(\phi(X;Q,g_\eta,\tau^\mathrm{CDE})\right)^2$. 
\end{restatable}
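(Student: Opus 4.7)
The plan is to embed the argument in the double machine-learning framework of Chernozhukov et al.\ (2018). The moment function $\phi$ is constructed to be Neyman-orthogonal in the two nuisances $(Q, g_\eta)$, so provided the nuisance rates multiply to $o(n^{-1/2})$, the cross-fitted plug-in is asymptotically linear with influence function $\phi$. I would first write the standard decomposition
\begin{equation*}
\sqrt{n}\bigl(\hat\tau^\text{TI} - \tau^{\mathrm{CDE}}\bigr) \;=\; \frac{1}{\sqrt{n}}\sum_{i=1}^n \phi\!\bigl(X_i;Q,g_\eta,\tau^{\mathrm{CDE}}\bigr) \;+\; \sqrt{n}\,R_n\!\bigl(\hat Q,\hat g_\eta\bigr),
\end{equation*}
show the remainder vanishes, and apply a CLT to the leading mean.

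The remainder $R_n$ splits into three pieces after invoking the $K$-fold construction: an empirical-process term of the form $n^{-1}\sum_i \bigl[f(Z_i;\hat Q^{(-k)},\hat g_\eta^{(-k)}) - f(Z_i;Q,g_\eta)\bigr] - \EE[\cdot]$, a quadratic product-bias, and a plug-in bias in $\hat p$. Because cross-fitting makes $\hat Q^{(-k)}$ and $\hat g_\eta^{(-k)}$ independent of the held-out fold, the first term is bounded by Chebyshev by an $L^2$-conditional-variance that goes to zero under assumptions 1, 3, and 4. The product-bias, after the usual algebra using $Q(0,X)=\EE[Y\mid A=0,\eta(X)]$, reduces essentially to $\EE\bigl[(\hat Q_0-Q_0)(\hat g_\eta - g_\eta)/(1-g_\eta)\bigr]$, which by Cauchy--Schwarz and assumption 2 is bounded by $\varepsilon^{-1}\|\hat Q_0 - Q_0\|_{L^2}\,\|\hat g_\eta - g_\eta\|_{L^2}$. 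The departure from vanilla DML is that $\hat g_\eta$ is fit against the \emph{estimated} representation $\hat\eta = (\hat Q_0,\hat Q_1)$, not against $\eta$. I would transfer the rate via the Lipschitz assumption by splitting
\begin{equation*}
\hat g_\eta(X) - g_\eta(X) \;=\; \bigl[\hat g_\eta(X) - \Pr(A=1\mid \hat\eta(X))\bigr] \;+\; \bigl[\Pr(A=1\mid\hat\eta(X)) - \Pr(A=1\mid\eta(X))\bigr],
\end{equation*}
bounding the first bracket by $O_p(n^{-1/4})$ from assumption 3 and the second by $L\,\|\hat\eta(X)-\eta(X)\|$ via assumption 2, which is $o_p(n^{-1/4})$ by assumption 1. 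Hence $\|\hat g_\eta - g_\eta\|_{L^2} = O_p(n^{-1/4})$, the product-bias is $o_p(n^{-1/2})$, and $\sqrt{n}\,R_n = o_p(1)$.

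For the leading term, under assumption 4 the influence contributions $\phi_i$ have finite $q$th moment with $q>2$ (using $|Y|^q$ bounded, $g_\eta$ bounded away from $0,1$, and the conditional second-moment bound of $Y-Q(A,X)$ given $\eta$), so an ordinary Lindeberg--Feller CLT yields $\frac{1}{\sqrt{n}}\sum_i \phi_i \Rightarrow \mathcal{N}(0,\sigma^2)$ with $\sigma^2 = \EE[\phi^2]$. Slutsky combines this with $\sqrt{n}\,R_n = o_p(1)$ to give the stated distributional limit, and consistency follows from $\sqrt{n}$-asymptotic linearity together with $\hat p \to p$. The main obstacle is the Lipschitz-transfer step: one must verify that the nonparametric rate of the propensity learner (assumption 3), stated conditionally on $\hat\eta(X)$, composes through the Lipschitz map with the $L^2$ rate of $\hat Q$, and that the moment bounds in assumption 4 are strong enough to carry uniform control of the cross-fitted empirical-process term. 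Once these are in place, the rest is routine bookkeeping within the DML toolbox.
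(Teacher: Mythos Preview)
Your proposal is correct and follows essentially the same route as the paper. The paper carries out exactly your Lipschitz-transfer decomposition of $\hat g_\eta - g_\eta$ into the nonparametric-regression error on $\hat\eta$ plus the representation error $f_g(\hat\eta)-f_g(\eta)$, concluding $\|\hat g_\eta - g_\eta\|_{L^2}=O(n^{-1/4})$; the only difference is that, rather than unpacking the empirical-process/product-bias/CLT decomposition by hand, the paper then verifies the hypotheses of Theorem~5.1 in \citet{Chernozhukov:Chetverikov:Demirer:Duflo:Hansen:Newey:Robins:2017} (including the high-probability boundedness of $\hat g_\eta$ away from $0$ and $1$, which you do not spell out but is implicit in your use of the DML machinery) and invokes that result as a black box.
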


The proof is provided in \Cref{sec:proof_asymp_normality}.

The key point from this theorem is that we get asymptotic normality at the (fast) $\sqrt{n}$-rate while requiring only a slow ($n^{1/4}$) convergence rate of $Q$. Intuitively, the reason is simply that, because $\hat{\eta}(X)$ is only 2-dimensional, it is always possible to nonparametrically estimate the propensity score from $\hat{\eta}$ at a fast rate---even naive KNN works! Effectively, this means the rate at which we estimate the true propensity score $g_\eta(X) = \Pr(A=1 \given \eta(X))$ is dominated by the rate at which we estimate $\eta(X)$, which is in turn determined by the rate for $\hat{Q}$. Now, the key property of the double ML estimator is that convergence only depends on the \emph{product} of the convergence rates of $\hat{Q}$ and $\hat{g}$. Accordingly, this procedure is robust in the sense that we only need to estimate $\hat{Q}$ at the square root of the rate we needed for the naive $Q$-only procedure. This is much more plausible in practice. As we will see in \Cref{section:experiments}, the TI-estimator dramatically improves the quality of the estimated confidence intervals and reduces the absolute bias of estimation.

\vspace{-8pt}
\begin{remark}
In addition to robustness to noisy estimation of $Q$, there are some other advantages this estimation procedure inherits from the double ML estimator. If $\hat{Q}$ is consistent, then the estimator is nonparametrically efficient in the sense that no other non-parametric estimator has a smaller asymptotic variance. That is, the procedure using the data as efficiently as possible.
\end{remark}

\section{Experiments}
\label{section:experiments}
We empirically study the method's capability to provide accurate causal estimates with good uncertainty quantification 
Testing using semi-synthetic data (where ground truth causal effects are known), we find that the estimation procedure yields accurate causal estimates and confidence intervals. In particular, the TI-estimator has significantly lower absolute bias and vastly better uncertainty quantification than the $Q$-only method.

Additionally, we study the effect of the choice of nonparametric propensity score estimator and the choice of double machine-learning estimator, and the method's robustness in regard to $\hat{Q}$'s miscalibration. These results are reported in \Cref{sec:additional_experiments,sec:low_coverage}. Although these choices do not matter asymptotically, we find they have a significant impact in actual finite sample estimation. We find that, in general, kernel regression works well for propensity score estimation and the vanilla the Augmented Inverse Probability of Treatment weighted Estimator (AIPTW) corresponding to the CDE works well.

Finally, we reproduce the real-data analysis from \cite{pryzant2020causal}. We find that politeness has a positive effect on reducing email response time.


\subsection{Amazon Reviews}

\begin{table}[t]
\caption{The TI-estimator significantly improves both bias and coverage relative to the baseline. Tables show average absolute bias and confidence interval coverage of CDE estimates, over 100 resimulations. The TI estimator $\hat{\tau}^\text{TI}$ displays higher accuracy/smaller absolute bias of point estimate and much larger coverage proportions compared to outcome-only estimator $\hat{\tau}^Q$. The treatment level equals true CDE, which takes 1.0 (with causal effect) and 0.0 (without causal effect). Low and high noise level corresponds to $\gamma$ set to 1.0 and 4.0. Low and high confounding level corresponds to $\beta_c$ set to 50.0 and 100.0.}
\footnotesize
\begin{subtable}{\linewidth}
\scriptsize
\centering
\caption{Average absolute bias}
\begin{tabular}{lrcccccccc}
\toprule
& \textbf{Noise:}  & \multicolumn{4}{c}{Low} & \multicolumn{4}{c}{High} \\
& \textbf{True CDE:} & \multicolumn{2}{c}{1.0} & \multicolumn{2}{c}{0.0} & \multicolumn{2}{c}{1.0} & \multicolumn{2}{c}{0.0} \\
& \textbf{Confounding:}  & Low  & High & Low & High & Low & High & Low & High\\ \midrule
\multicolumn{2}{l}{$\hat{\tau}^Q$} 
& 0.100 & 0.376 & \cellcolor{LightCyan}0.076 & 0.326 
& 0.563 &  0.548 & 0.502 & 0.498 \\
\multicolumn{2}{l}{$\hat{\tau}^\text{TI}$} 
& \cellcolor{LightCyan}0.069 
& \cellcolor{LightCyan}0.059 
& 0.114 
& \cellcolor{LightCyan}0.074 
& \cellcolor{LightCyan}0.088 
& \cellcolor{LightCyan}0.049 
& \cellcolor{LightCyan}0.002 
& \cellcolor{LightCyan}0.089 \\ 
\bottomrule
\end{tabular}
\end{subtable}
\hfill
\begin{subtable}{\linewidth}
\centering
\caption{Coverage proportions of 95\% confidence intervals}
\begin{tabular}{lrcccccccc}
\toprule
& \textbf{Noise:} & \multicolumn{4}{c}{Low} & \multicolumn{4}{c}{High} \\
& \textbf{True CDE:} & \multicolumn{2}{c}{1.0} & \multicolumn{2}{c}{0.0}   & \multicolumn{2}{c}{1.0} & \multicolumn{2}{c}{0.0} \\
& \textbf{Confounding:} & Low & High & Low & High & Low & High & Low & High \\ \midrule
\multicolumn{2}{l}{$\hat{\tau}^Q$} 
& 0\% & 0\% & 2\% & 0\% 
& 0\% & 0\% & 0\% & 0\% \\
\multicolumn{2}{l}{$\hat{\tau}^\text{TI}$} 
& \cellcolor{LightCyan}57\% 
& \cellcolor{LightCyan}84\% 
& \cellcolor{LightCyan}57\% 
& \cellcolor{LightCyan}79\% 
& \cellcolor{LightCyan}87\% 
& \cellcolor{LightCyan}80\% 
& \cellcolor{LightCyan}77\% 
& \cellcolor{LightCyan}81\% \\ 
\bottomrule
\end{tabular}
\end{subtable} 
\label{tab:coverage}
\vspace{-10pt}
\end{table}

\paragraph{Dataset}
We closely follow the setup of \citet{pryzant2020causal}. We use publicly available Amazon reviews for music products as the basis for our semi-synthetic data. We include reviews for mp3, CD and vinyl, and among these exclude reviews for products costing more than \$100 or shorter than 5 words. The treatment $A$ is whether the review is five stars ($A=1$) or one/two stars ($A=0$).

To have a ground truth causal effect, we must now simulate the outcome.
To produce a realistic dataset, we choose a real variable as the confounder. Namely, the confounder $C$ is whether the product is a CD ($C=1$) or not ($C=0$).
Then, outcome $Y$ is generated according to $Y \leftarrow \beta_a A+\beta_c\left(\pi(C)-\beta_o\right)+\gamma N(0,1)$.
The true causal effect is controlled by $\beta_a$. We choose $\beta_a=1.0,\ 0.0$ to generate data with and without causal effects. In this setting, $\beta_a$ is the oracle value of our causal estimand. 
The strength of confounding is controlled by $\beta_c$. We choose $\beta_c=50.0,\ 100.0$. The ground-truth propensity score is $\pi(C)=P(A=1|C)$. We set it to have the value $\pi(0)=0.8$ and $\pi(1)=0.6$ (by subsampling the data). $\beta_o$ is an offset $\EE[\pi(C)]=\pi(0)\Pr(C=0)+\pi(1)\Pr(C=1)$, where $\Pr(C=a)$, $a=0,1$ are estimated from data. Finally, the noise level is controlled by $\gamma$; we choose $1.0$ and $4.0$ to simulate data with small and large noise.
The final dataset has $10,685$ data entries.

\paragraph{Protocol}
For the language model, we use the pretrained \texttt{distilbert-base-uncased} model provided by the \texttt{transformers} package. The model is trained in the k-folding fashion with 5 folds. We apply the Adam optimizer \citep{kingma2014adam} with a learning rate of $2e^{-5}$ and a batch size of 64. The maximum number of epochs is set as 20, with early stopping based on validation loss with a patience of 6. Each experiment is replicated with five different seeds and the final $\hat{Q}(a,x_i)$ predictions  are obtained by averaging the predictions from the 5 resulting models. The propensity model is implemented by running the Gaussian process regression using \texttt{GaussianProcessClassifier} in the \texttt{sklearn} package with \texttt{DotProduct + WhiteKernel} kernel. (We choose different random state for the GPR to guarantee the convergence of the GPR.) The coverage experiment uses 100 replicates. 


\paragraph{Results}
The main question here is the efficacy of the estimation procedure.
\Cref{tab:coverage} compares the outcome-only estimator $\hat{\tau}^Q$ and the estimator $\hat{\tau}^\text{TI}$.
First, the absolute bias of the new method is significantly lower than the absolute bias of the outcome-only estimator. This is particularly true where there is moderate to high levels of confounding. 
Next, we check actual coverage rates over 100 replicates of the experiment.
First, we find that the naive approach for the outcome-only estimator fails completely.
The nominal confidence interval almost never actually includes the true effect. It is wildly optimistic. 
By contrast, the confidence intervals from the new method often cover the true value. This is an enormous improvement over the baseline.
Nevertheless, they still do not actually achieve their nominal (95\%) coverage. This may be because the $\hat{Q}$ estimate is still not good enough for the asymptotics to kick in, and we are not yet justified in ignoring the uncertainty from model fitting. 

\subsection{Application: Consumer Complaints to the Financial Protection Bureau} 
We follow the same pipeline of the real data experiment in \citep[][\S 6.2]{pryzant2020causal}.  The dataset is consumers complaints made to the financial protection. Treatment $A$ is politeness (measured using \cite{yeomans2018politeness}) and the outcome $Y$ is a binary indicator of whether complaints receive a response within 15 days. 

We use the same training procedure as for the simulation data. 
\Cref{tab:real_data} shows point estimates and their 95\% confidence intervals. 
Notice that the naive estimator show a significant \emph{negative} effect of politeness on reducing response time. 
On the other hand, the more accurate AIPTW method as well as the outcome-only estimator have confidence intervals that cover only positive values, so we conclude that consumers' politeness has a positive effect on response time.
This matches our intuitions that being more polite should increase the probability of receiving a timely reply. 

\begin{table}[t]
\caption{Politeness has a positive causal effect on response time. The table displays different CDE estimates and their 95\% confidence intervals. The unadjusted one is the difference of sample means of treatment (polite) group and control group. The confidence interval of $\hat{\tau}^\text{TI}$ only covers positive values, which means politeness can increase the probability of timely response.}
\centering
\begin{tabular}{lcc}
\toprule
\textbf{Estimator}                                     & \textbf{CDE} & \textbf{Confidence Interval} \\\midrule
\textit{unadjusted $\hat{\tau}^\text{naive}$} & -0.038       &   {[} -0.0679, -0.0073 {]}     \\
\textit{$\hat{\tau}^Q$} & 0.195       & {[} 0.1910, 0.1993 {]}        \\
\textit{$\hat{\tau}^\text{TI}$} & 0.200       & {[} 0.1708, 0.2288 {]}        \\
\bottomrule
\end{tabular}
\label{tab:real_data}
\vspace{-15pt}
\end{table}

\section{Discussion}\label{section:discussion}

In this paper, we address the estimation of the causal effect of a text document attribute using observational data. 
The key challenge is that we must adjust for the text---to handle confounding---but adjusting for all of the text violates overlap. We saw that this issue could be effectively circumvented with a suitable choice of estimand and estimation procedure. 
In particular, we have seen an estimand that corresponds to the qualitative causal question, and an estimator that is valid even when the outcome model is learned slowly.  
The procedure also circumvents the need for bootstrapping, which is prohibitively expensive in our setting.

There are some limitations. The actual coverage proportion of our estimator is below the nominal level. This is presumably due to the imperfect fit of the conditional outcome model. Diagnostics (see \Cref{sec:low_coverage}) show that as conditional outcome estimations become more accurate, the TI estimator becomes less biased, and its coverage increases.
It seems plausible that the issue could be resolved by using more powerful language models. 

Although we have focused on text in this paper, the problem of causal estimation with apparent overlap violation exists in any problem where we must adjust for unstructured and high-dimensional covariates. Another interesting direction for future work is to understand how analogous procedures work outside the text setting.

\section*{Acknowledgement}
Thanks to Alexander D'Amour for feedback on an earlier draft. We acknowledge the University of Chicago's Research Computing Center for providing computing resources. This work was partially supported by Open Philanthropy.

\bibliography{causality}


\newpage
\setcounter{page}{1}
\appendix


\section{Proof of Asymptotic Normality}
\label{sec:proof_asymp_normality}
\consistent*

\begin{proof}
We first prove that misestimation of propensity score has the rate $n^{-\frac{1}{4}}$. For simplicity, we use $f_g$, $\hat{f}_g$: $(u,v)\in\Reals^2\to\Reals$ to denote conditional probability $\Pr (A=1|u,v)=f_g(u,v)$ and the estimated propensity function by running the nonparametric regression  $\hat{\Pr }(A=1|u,v)=\hat{f}_g(u,v)$. Specifically, we have $f_g(Q(0,X),Q(1,X))=g_\eta(X)$ and $\hat{f}_g(\hat{Q}_0(X),\hat{Q}_1(X))=\hat{\Pr }(A=1|\hat{Q}_0(X),\hat{Q}_1(X))=\hat{g}_\eta(X)$. 
Since $\EE[\left(\hat{Q}_0(X)-Q(0,X)\right)^2]^\frac{1}{2}$, $\EE[\left(\hat{Q}_1(X)-Q(1,X)\right)^2]^\frac{1}{2}=o(n^{-1/4})$ and $f_g$ is Lipschitz continuous, we have
\begin{equation}
\label{eq:g_error_cts}
\begin{split}
&\EE\left[\bigg|f_g(\hat{Q}_0(X),\hat{Q}_1(X))-f_g\left(Q(0,X),Q(1,X)\right)\bigg|^2\right]^\frac{1}{2} \\
\le& L\cdot\EE\left[\bigg\|\left(\hat{Q}_0(X),\hat{Q}_1(X)\right)-\left(Q(0,X),Q(1,X)\right)\bigg\|_2^2\right]^\frac{1}{2}\\
=&L\cdot \left\{\EE\left[\left(\hat{Q}_0(X)-Q(0,X)\right)^2\right]+\right.\left.\EE\left[\left(\hat{Q}_1(X)-Q(1,X)\right)^2\right]\right\}^\frac{1}{2}\\
=& o(n^{-1/4})
\end{split}
\end{equation}
Since the true propensity function $f_g$ is Lipschitz continuous on $\Reals^2$, the mean squared error rate of the k nearest neighbor is $O(n^{-1/2})$ \cite{gyorfi2002distribution}. In addition, since the propensity score function and its estimation are bounded under 1, we have the following equation 
\begin{equation}
\label{eq:g_error_nonpara}
\EE\bigg|\hat{f}_g(\hat{Q}_0(X),\hat{Q}_1(X))-f_g(\hat{Q}_0(X),\hat{Q}_1(X))\bigg|^2=O(n^{-1/2}),
\end{equation}
due to the dominated convergence theorem.
By \Cref{eq:g_error_cts} and \Cref{eq:g_error_nonpara}, we can bound the mean squared error of estimated propensity score in the following form:
\begin{equation}
\label{eq:g_mse}
\begin{split}
&\EE\left[\left(\hat{g}_\eta(X)-g_\eta(X)\right)^2\right]\\
\le& \EE\left[\left(\hat{g}_\eta(X)-f_g(\hat{Q}_0(X),\hat{Q}_1(X))\right)^2\right] +\EE\left[\left(f_g(\hat{Q}_0(X),\hat{Q}_1(X))-g_\eta(X)\right)^2\right]\\
=&\EE\bigg|f_g(\hat{Q}_0(X),\hat{Q}_1(X))-f_g\left(Q(0,X),Q(1,X)\right)\bigg|^2+\\
&\EE\bigg|\hat{f}_g(\hat{Q}_0(X),\hat{Q}_1(X))-f_g(\hat{Q}_0(X),\hat{Q}_1(X))\bigg|^2\\
=&O(n^{-1/2}),
\end{split}
\end{equation}
that is $\EE\left[\left(\hat{g}_\eta(X)-g_\eta(X)\right)^2\right]^\frac{1}{2}=O(n^{-\frac{1}{4}})$.

Before we apply the conclusion of Theorem 5.1 in \citep{Chernozhukov:Chetverikov:Demirer:Duflo:Hansen:Newey:Robins:2017}, we need to check all assumptions in Assumption 5.1 hold in \citet{Chernozhukov:Chetverikov:Demirer:Duflo:Hansen:Newey:Robins:2017}. 
Let $C:=\max\left\{(2C_1^q+2^q)^\frac{1}{q},C_2\right\}$.
\begin{enumerate}[label=(\alph*)]
    \item $\EE[Y-Q(A,X)\given \eta(X),A]=0$, $\EE[A-g_\eta(X)\given\eta(X)]=0$ are easily checked by invoking definitions of $Q$ and $g_\eta$.
    \item $\EE[|Y|^q]^\frac{1}{q}\le C$,  $\EE[\left(Y-Q(A,X)\right)^2]^\frac{1}{2}\ge c$, and\\
    $\sup_{\eta\in\text{supp}(\eta(X))}\EE[\left(Y-Q(A,X)\right)^2\given \eta(X)=\eta]\le C$
    are guaranteed by the fourth condition in the theorem.
    \item $\Pr\left(\epsilon\le g_\eta(X)\le 1-\epsilon\right)=1$ is the second condition in the theorem.
    \item Since propensity score function and its estimation are bounded under 1, we have\\
    \begin{equation*}
    \begin{split}
    &\left(\EE[\left|\hat{Q}_1(X)-Q(1,X)\right|^q]+\EE[\left|\hat{Q}_0(X)-Q(0,X)\right|^q]+\EE[\left|\hat{g}_\eta(X)-g_\eta(X)\right|^q]\right)^\frac{1}{q}\\
    &\le (C_1^q+C_1^q+2^q)^\frac{1}{q}\\
    &\le C
    \end{split}   
    \end{equation*}
    \item Based on \Cref{eq:g_mse} and condition 1 in the theorem, we have
    \begin{equation*}
    \begin{split}
    &\left(\EE[\left(\hat{Q}_1(X)-Q(1,X)\right)^2]+\EE[\left(\hat{Q}_0(X)-Q(0,X)\right)^2]+\EE[\left(\hat{g}_\eta(X)-g_\eta(X)\right)^2]\right)^\frac{1}{2}\\
    &\le \left[o(n^{-\frac{1}{2}})+o(n^{-\frac{1}{2}})+O(n^{-\frac{1}{2}})\right]^\frac{1}{2}\\
    &\le O(n^{-\frac{1}{4}}),\\
    &\EE[\left(\hat{Q}_0(X)-Q(0,X)\right)^2]^\frac{1}{2}\cdot\EE[\left(\hat{g}_\eta(X)-g_\eta(X)\right)^2]^\frac{1}{2}=o(n^{-\frac{1}{2}})
    \end{split}   
    \end{equation*}
    \item Based on condition 3 in the theorem, we have
    $$\sup_{x\in\text{supp}(X)}\EE[\left(\hat{g}_\eta(X)-\Pr(A=1\given \hat{\eta}(X))\right)^2\given \hat\eta(X)=\hat\eta(x)]=O(n^{-\frac{1}{2}}).$$
    We consider a smaller positive constant $\tilde{\epsilon}$ instead of $\epsilon$. Note that for $\tilde{\epsilon}<\epsilon$, we still have $\Pr(\tilde{\epsilon}\le g_\eta(X)\le 1-\tilde{\epsilon})=1$. Then,
    \begin{equation*}
    \begin{split}
    &\Pr\left(\sup_{x\in\text{supp}(X)}\left|\hat{g}_\eta(x)-\frac{1}{2}\right|>\frac{1}{2}-\tilde{\epsilon}\right)=\Pr\left(\inf_{x\in\text{supp}(X)}\hat{g}_\eta(x)<\tilde{\epsilon}\right)+\Pr\left(\sup_{x\in\text{supp}(X)}\hat{g}_\eta(x)>1-\tilde{\epsilon}\right)\\
    \le&\Pr\left(\inf_{x\in\text{supp}(X)}\Pr(A=1\given \hat{\eta}(X)=\hat{\eta}(x))-\inf_{x\in\text{supp}(X)}\hat{g}_\eta(x)>\epsilon-\tilde{\epsilon}\right)\\
    &+\Pr\left(\sup_{x\in\text{supp}(X)}\hat{g}_\eta(x)-\sup_{x\in\text{supp}(X)}\Pr(A=1\given \hat{\eta}(X)=\hat{\eta}(x))>1-\tilde{\epsilon}-(1-\epsilon)\right)\\
    \le&\frac{\EE\left[\left(\inf_{x\in\text{supp}(X)}\hat{g}_\eta(x)-\inf_{x\in\text{supp}(X)}\Pr(A=1\given \hat{\eta}(X)=\hat{\eta}(x))\right)^2\right]}{(\epsilon-\tilde{\epsilon})^2}+\\
    &\frac{\EE\left[\left(\sup_{x\in\text{supp}(X)}\hat{g}_\eta(x)-\sup_{x\in\text{supp}(X)}\Pr(A=1\given \hat{\eta}(X)=\hat{\eta}(x))\right)^2\right]}{(\epsilon-\tilde{\epsilon})^2}\\
    \le& \frac{2\sup_{x\in\text{supp}(X)}\EE\left[\left(\hat{g}_\eta(X)-\Pr\left(A=1\given \hat{\eta}(X)=\hat{\eta}(x)\right)\right)^2\right]}{(\epsilon-\tilde{\epsilon})^2}\\
    =&O(n^{-\frac{1}{2}})
    \end{split}
    \end{equation*}
Hence, $\Pr(\sup_{x\in\text{supp}(X)}\left|\hat{g}_\eta(x)-\frac{1}{2}\right|\le\frac{1}{2}-\tilde{\epsilon})\ge 1-O(n^{-\frac{1}{2}})$.
\end{enumerate}
With (a)-(f), we can invoke the conclusion in Theorem 5.1 in \citep{Chernozhukov:Chetverikov:Demirer:Duflo:Hansen:Newey:Robins:2017}, and get the asymptotic normality of the TI estimator.
\end{proof}

\section{Proof of Causal Identification}
\label{sec:proof_identification}
\identification*
\begin{proof}

We first prove that this two-dimensional confounding part $\eta(X)$ satisfies positivity. Since $\left(Q(0,X),\ Q(1,X)\right) = \left(\EE\left[Y \given\ A=1,X_{A\land Z},X_Z\right],\ \EE\left[Y \given\ A=0,X_{A\land Z},X_Z\right]\right)$ is a function of $\left(X_{A\land Z},\ X_Z\right)$, the following equations hold:
\begin{equation}
\label{eq:propensities}
\begin{split}
\Pr (A=1\given Q(0,X),Q(1,X))=&\EE(A\given Q(0,X),Q(1,X))\\
    =&\EE\left[E\left(A\given X_{A\land Z},X_Z\right)\given Q(0,X),Q(1,X)\right]\\
    =&\EE\left[\Pr (A=1|\ X_{A\land Z},X_Z)\given Q(0,X),Q(1,X)\right].
\end{split}
\end{equation}

As $0<\Pr (A=1|\ X_{A\land Z},X_Z)<1$, we have $0<\Pr (A=1|\ Q(0,X),Q(1,X))<1$. Furthermore, we have $0<\Pr (\tilde{A}=1|\ Q(0,X),Q(1,X))<1$ due to almost everywhere equivalence of $A$ and $\tilde{A}$.

Since $A=\tilde{A}$, we can rewrite \Cref{eq:CDE_text} by replacing $A$ with $\tilde{A}$ in the following form:
\begin{equation}
\label{eq:CDE_with_prognostics}
\begin{split}
 \mathrm{CDE}=&\EE_{X_{A\land Z},X_Z\given\tilde{A}=1}\left[\ \EE(Y \given\cdo(\tilde{A}=1),X_{A\land Z}, X_Z)-\EE(Y \given\cdo(\tilde{A}=0),X_{A\land Z}, X_Z)\ \right] \\
 =&\EE_{X_{A\land Z},X_Z\given\tilde{A}=1}\left[\ \EE(Y \given\tilde{A}=1,X_{A\land Z}, X_Z)-\EE(Y \given\tilde{A}=0,X_{A\land Z}, X_Z)\ \right] \\
 =& \EE_{X_{A\land Z},X_Z\given\tilde{A}=1}\left[\ \EE(Y \given\tilde{A}=1,X)-\EE(Y \given\tilde{A}=0,X)\ \right]\\
 =& \EE_{X_{A\land Z},X_Z\given\tilde{A}=1}\left[\EE(Y \given\ \tilde{A}=1,Q(0,X),Q(1,X))\right]-\EE\left[\EE(Y \given\ \tilde{A}=0,Q(0,X),Q(1,X))\right]\\
 =& \EE_{X_{A\land Z},X_Z\given \tilde{A}=1}\left[\EE(Y \given\ \tilde{A}=1,\eta(X))\right]-\EE\left[\EE(Y \given\ \tilde{A}=0,\eta(X))\right]\\
  =& \EE_{X\given\tilde{A}=1}\left[\EE(Y \given\ \tilde{A}=1,\eta(X))\right]-\EE\left[\EE(Y \given\ \tilde{A}=0,\eta(X))\right].
\end{split}
\end{equation}
The equivalence of the first and the second line is because $X_{A\wedge Z}$, $X_Z$ block all backdoor paths between $\tilde{A}$ and $Y$ (See \Cref{fig:causal_DAG_decomposition}) and $0<\Pr (\tilde{A}=1|\ Q(0,X),Q(1,X))<1$. Thus, the ``\textit{do-operation}'' in the first line can be safely removed. Equivalence of the second line and the third line is due to $Q(\tilde{A},X)=\EE\left(Y\given\tilde{A},X_{A\wedge Z},X_Z\right)$, which is subject to the causal model in \Cref{fig:causal_DAG_decomposition}. The last equation is based on the fact that $\eta(X)$ is a function of only $X_{A\wedge Z}$ and $X_Z$. (It can be easily checked by using the definition of the expectation.)

\Cref{eq:CDE_with_prognostics} shows that $(Q(0,X),Q(1,X))$ is a two-dimensional confounding variable such that CDE is identifiable when we adjust for it as the confounding part. 

\end{proof}

Note that if $f$ and $h$ are two invertible functions on $\Reals$, $\left(f(Q(0,X)),h(Q(1,X))\right)$ also suffices the identification for CDE. Since
the sigma algebra should be the same for $\left(Q(0,X),Q(1,X)\right)$ and $f(Q(0,X)),h(Q(1,X))$, i.e.,
\begin{equation*}
\sigma\left(Q(0,X),Q(1,X)\right)=\sigma\left(f(Q(0,X)),h(Q(1,X))\right).
\end{equation*}
Hence, we have
\begin{equation}
\begin{split}
     &\Pr \left(A=1\given Q(0,X),Q(1,X)\right)=\Pr \left(A=1\given f(Q(0,X)),h(Q(1,X))\right),\\
     &\EE\left(Y \given Q(0,X),Q(1,X)\right)=\EE\left(Y \given f(Q(0,X)),h(Q(1,X))\right).
\end{split}
\end{equation}

\section{Additional Experiments}
\label{sec:additional_experiments}
We conduct additional experiments to show how the estimation of causal effect changes 1) over different nonparametric models for the propensity score estimation, and 2) when using different double machine learning estimators on causal estimation.
Specifically, for the first study, we apply different nonparametric models and the logistic regression to the estimated confounding part $\hat{\eta}(X)=\left(\hat{Q}_0(X),\hat{Q}_1(X)\right)$ to obtain propensity scores. We use ATT AIPTW in all above cases for causal effect estimation. For the second study, we fix the first two stages of the TI estimator, i.e. we apply Q-Net for the conditional outcomes and compute propensity scores with the Gaussian process regression where the kernel function is the summation of dot product and white noise. Estimated conditional outcomes and propensity scores are plugged into different double machine learning estimators. We make the following conclusions with results of above experiments.

\paragraph{The choice of nonparametric models is significant.}
\Cref{tab:diff_g} summarizes results with applying different regression models for the propensity estimation. We can see that suitable nonparametric models will strongly increase the coverage proportion over true causal estimand. Therefore, we conclude that the accuracy in causal estimation is highly dependent on the choice of nonparametric models. In practice, when  there is some prior information about the propensity score function, we should apply the most suitable nonparametric model to increase the reliability of our causal estimation. 

\paragraph{The ATT AIPTW is consistently the best double machine learning estimator.}
\Cref{tab:diff_estimator} shows results by applying different double machine learning estimators. We apply both estimators for the average treatment effect (ATE) and the controlled direct effect (CDE). The bias of ``unadjusted'' estimator $\hat{\tau}^\text{naive}$ is also included in \Cref{tab:diff_estimator} (a). For absolute bias, ATT AIPTW $\hat{\tau}^\text{TI}$ has comparable results with other double machine learning estimators in most cases. For coverage proportion of confidence intervals, though it has lower rates in some cases, $\hat{\tau}^\text{TI}$ has consistently the best performance. Especially in high confounding situations, the advantage of $\hat{\tau}^\text{TI}$ is obvious.

\paragraph{Estimator}
For each dataset, we compute estimators as follows. $n_1$ and $n_0$ stands for the number of individuals in the treated and controlled group. $n=n_1+n_0$ is the total number of individuals.
\begin{itemize}
    \item ``Unadjusted'' baseline estimator: $\hat{\tau}^\text{naive}=\frac{1}{n_1}\sum_{i:A_i=1}Y_i-\frac{1}{n_0}\sum_{i:A_i=0}Y_i$
    \item ``Outcome-only'' estimator:
    $\hat{\tau}^\text{Q}=\frac{1}{n_1}\sum_{i:A_i=1}\hat{Q}_{1,i}-\hat{Q}_{0,i}$\\
    \item ATT AIPTW:
    $\hat{\tau}^\text{TI}=\frac{1}{n_1}\sum_{i:A_i=1}A_i(Y_i-\hat{Q}_{0,i})-(1-A_i)(Y_i-\hat{Q}_{0,i})\frac{\hat{g}_i}{1-\hat{g}_i}$
\end{itemize}

\begin{table}[ht]
\caption{The choice of nonparametric models for the TI-estimator is significant. Tables show average absolute bias and 95\% confidence intervals' coverage of $\hat{\tau}^\text{TI}$ with applying different nonparametric models in the second stage. The Gaussian process regression with the dot product+ white noise kernel has the best performance (lowest absolute bias and highest coverage proportion). The treatment level is equal to true CDE, which takes 1.0 (with causal effect) and 0.0 (without causal effect). Low and high noise level corresponds to $\gamma=1.0$ and $4.0$. Low and high confounding level corresponds to $\beta_c=50.0$ and $100.0$.}
\begin{subtable}{\linewidth}
\centering
\scriptsize
\caption{Average absolute bias}
\begin{tabular}{llcccccccc}
\toprule
\multicolumn{2}{r}{\textbf{Noise:}} & \multicolumn{4}{c}{Low} & \multicolumn{4}{c}{High}\\
\multicolumn{2}{r}{\textbf{Treatment (oracle causal effect):}} & \multicolumn{2}{c}{1.0} & \multicolumn{2}{c}{0.0} & \multicolumn{2}{c}{1.0} & \multicolumn{2}{c}{0.0}\\
\multicolumn{2}{r}{\textbf{Confounding:}} & Low & High & Low & High & Low & High & Low & High\\\midrule
\multicolumn{2}{l}{\textit{GPR (Dot Product+White Noise)}} 
& \cellcolor{LightCyan}0.069 
& \cellcolor{LightCyan}0.059 
& 0.113 
& \cellcolor{LightCyan}0.074 
& \cellcolor{LightCyan}0.088 
& \cellcolor{LightCyan}0.049 
& \cellcolor{LightCyan}0.002 
& \cellcolor{LightCyan}0.089 \\
\multicolumn{2}{l}{\textit{GPR (RBF)}} 
& 0.150 & 0.348 & 0.156 & 0.329  
& 0.363 & 0.452 & 0.344  & 0.424 \\
\multicolumn{2}{l}{\textit{KNN}} 
& 0.147 & 0.334 & 0.144 & 0.313 
& 0.316 & 0.372 & 0.304 & 0.356 \\
\multicolumn{2}{l}{\textit{AdaBoost}} 
& \cellcolor{LightCyan}0.074 & 0.349 & \cellcolor{LightCyan}0.061 & 0.323  
& 0.526  & 0.497 & 0.479  & 0.464  \\
\multicolumn{2}{l}{\textit{Logistic}}  
& \cellcolor{LightCyan}0.070 
& \cellcolor{LightCyan}0.057 
& 0.114 
& \cellcolor{LightCyan}0.073 
& \cellcolor{LightCyan}0.086 
& \cellcolor{LightCyan}0.047 
& \cellcolor{LightCyan}-0.001 
& \cellcolor{LightCyan}0.087 \\ \bottomrule
\end{tabular}
\end{subtable}
\hfill
\begin{subtable}{\linewidth}
\centering
\scriptsize
\caption{Coverage proportions of 95\% confidence intervals}
\begin{tabular}{llcccccccc}
\toprule
\multicolumn{2}{r}{\textbf{Noise:}} & \multicolumn{4}{c}{Low} & \multicolumn{4}{c}{High}\\
\multicolumn{2}{r}{\textbf{Treatment (oracle causal effect):}} & \multicolumn{2}{c}{1.0} & \multicolumn{2}{c}{0.0} & \multicolumn{2}{c}{1.0} & \multicolumn{2}{c}{0.0} \\
\multicolumn{2}{r}{\textbf{Confounding:}} & Low & High & Low & High & Low & High & Low & High \\\midrule
\multicolumn{2}{l}{\textit{GPR (Dot Product+White Noise)}} 
& \cellcolor{LightCyan}57\% 
& \cellcolor{LightCyan}84\% 
& \cellcolor{LightCyan}57\% 
& \cellcolor{LightCyan}79\% 
& \cellcolor{LightCyan}87\% 
& \cellcolor{LightCyan}80\% 
& \cellcolor{LightCyan}77\% 
& \cellcolor{LightCyan}81\% \\
\multicolumn{2}{l}{\textit{GPR (RBF)}} 
& 31\% & 0\% & 41\% & 0\%  
& 7\% & 7\% & 17\% & 19\% \\
\multicolumn{2}{l}{\textit{KNN}} 
& 18\% & 0\% & 39\% & 0\% 
& 11\% & 8\% & 11\% & 8\% \\
\multicolumn{2}{l}{\textit{AdaBoost}} 
& 25\% & 0\% & 35\% & 0\%  
& 0\%  & 0\%  & 0\%   & 0\%  \\
\multicolumn{2}{l}{\textit{Logistic}}  
& \cellcolor{LightCyan}58\% 
& \cellcolor{LightCyan}84\% 
& \cellcolor{LightCyan}57\% 
& \cellcolor{LightCyan}79\% 
& \cellcolor{LightCyan}87\% 
& \cellcolor{LightCyan}80\% 
& \cellcolor{LightCyan}78\% 
& \cellcolor{LightCyan}81\% \\ \bottomrule
\end{tabular}
\end{subtable} 
\label{tab:diff_g}
\end{table}

\begin{table}[ht]
\caption{The ATT AIPTW is consistently the best double machine learning estimator for this causal problem. Tables show average absolute bias and 95\% confidence intervals' coverage of different causal estimations. ATT AIPTW $\hat{\tau}^\text{TI}$ shows consistently the lowest absolute bias and highest coverage rate. For propensity score estimation, the Gaussian process regression with the dot product+ white noise kernel is applied for all estimators. The treatment level is equal to true CDE/true ATE, which takes 1.0 (with causal effect) and 0.0 (without causal effect). Low and high noise level corresponds to $\gamma=1.0$ and 4.0. Low and high confounding level corresponds to $\beta_c=50.0$ and 100.0.}
\begin{subtable}{\linewidth}
\centering
\scriptsize
\caption{Average absolute bias}
\begin{tabular}{llcccccccc}
\toprule
\multicolumn{2}{r}{\textbf{Noise:}} & \multicolumn{4}{c}{Low} & \multicolumn{4}{c}{High}\\
\multicolumn{2}{r}{\textbf{Treatment (oracle CDE):}} & \multicolumn{2}{c}{1.0} & \multicolumn{2}{c}{0.0} & \multicolumn{2}{c}{1.0} & \multicolumn{2}{c}{0.0} \\
\multicolumn{2}{r}{\textbf{Confounding:}} & Low & High & Low & High & Low & High & Low & High \\\midrule
\multicolumn{2}{l}{\textit{unadjusted $\hat{\tau}^\text{naive}$}} 
& 1.071  & 2.143 & 1.071 & 2.1453
& 1.068 & 2.140 & 1.069 & 2.140 \\
\multicolumn{2}{l}{\textit{ATE AIPTW}}  
& 0.094  & 0.178 & 0.128  & 0.195 
& 0.122 & 0.106  & \cellcolor{LightCyan}0.061  & 0.140\\
\multicolumn{2}{l}{\textit{ATE BMM}} 
& 0.094 & 0.176 & 0.128 & 0.193 
& 0.122 & 0.106  & \cellcolor{LightCyan}0.061  & 0.140 \\
\multicolumn{2}{l}{\textit{ATE IPTW}} 
& -0.574 & -1.492 & -1.839  & -1.807 
& \cellcolor{LightCyan}-0.082 & -0.592 & -0.393 & -0.649 \\
\multicolumn{2}{l}{\textit{ATT AIPTW: $\hat{\tau}^\text{TI}$}}  
& \cellcolor{LightCyan}0.069 
&\cellcolor{LightCyan}0.059 
& 0.114  
& \cellcolor{LightCyan}0.074 
& \cellcolor{LightCyan}0.088  
& \cellcolor{LightCyan}0.049  
& \cellcolor{LightCyan}0.002 
& \cellcolor{LightCyan}0.089  \\
\multicolumn{2}{l}{\textit{ATT BMM}} 
& \cellcolor{LightCyan}0.075  & 0.147  
& \cellcolor{LightCyan}-0.031 & \cellcolor{LightCyan}0.062 
& 0.621   & 0.454   & 0.464   & 0.337 \\
\multicolumn{2}{l}{\textit{ATT TMLE:}}  
& \cellcolor{LightCyan}0.084 & 0.194 & \cellcolor{LightCyan}0.085  & 0.196 
& 0.186  & 0.136  & 0.174 & 0.163\\
\bottomrule
\end{tabular}
\end{subtable}
\hfill
\begin{subtable}{\linewidth}
\centering
\scriptsize
\caption{Coverage Proportions of 95\% confidence intervals}
\begin{tabular}{llcccccccc}
\toprule
\multicolumn{2}{r}{\textbf{Noise:}}                               & \multicolumn{4}{c}{Low}                                 & \multicolumn{4}{c}{High}                                  \\
\multicolumn{2}{r}{\textbf{Treatment (oracle CDE):}}                           & \multicolumn{2}{c}{1.0}    & \multicolumn{2}{c}{0.0}    & \multicolumn{2}{c}{1.0}     & \multicolumn{2}{c}{0.0}     \\
\multicolumn{2}{r}{\textbf{Confounding:}}                         & Low          & High        & Low          & High        & Low          & High         & Low          & High         \\\midrule
\multicolumn{2}{l}{\textit{ATE AIPTW}} 
& 37\%  & 36\% & \cellcolor{LightCyan}69\%  & 33\% 
& 75\% & \cellcolor{LightCyan}79\%  & \cellcolor{LightCyan}79\%  & 71\%  \\
\multicolumn{2}{l}{\textit{ATE BMM}}
& 39\%  & 35\% & \cellcolor{LightCyan}70\%  & 36\% 
& 75\%  & \cellcolor{LightCyan}79\%  & \cellcolor{LightCyan}79\%  & 71\%  \\
\multicolumn{2}{l}{\textit{ATE IPTW}}                       
& 11\% & 1\% & 0\%  & 1\% 
& \cellcolor{LightCyan}90\% & 39\% & 44\% & 37\% \\
\multicolumn{2}{l}{\textit{ATT AIPTW: $\hat{\tau}^\text{TI}$}} 
& \cellcolor{LightCyan}57\%  
& \cellcolor{LightCyan}84\% 
& 57\%  
& \cellcolor{LightCyan}79\% 
& \cellcolor{LightCyan}87\%  
& \cellcolor{LightCyan}80\%  
& \cellcolor{LightCyan}77\% 
& \cellcolor{LightCyan}81\%  \\
\multicolumn{2}{l}{\textit{ATT BMM}}
& 26\% & 4\%  & 49\% & 41\% 
& 1\%  & 3\%   & 1\%   & 14\% \\ 
\multicolumn{2}{l}{\textit{ATT TMLE}} 
& 48\% & 22\% & \cellcolor{LightCyan}75\% & 24\%
& 51\% & \cellcolor{LightCyan}77\% & 72\% &  67\%\\
\bottomrule
\end{tabular}
\end{subtable} 
\label{tab:diff_estimator}
\end{table}

\section{Discussion of Low Coverage}
\label{sec:low_coverage}
In this section, we discuss why the confidence intervals we get (See \Cref{tab:coverage}) have lower coverage than the nominated level 95\%. We conduct diagnostics and find that the inaccuracy of $Q$'s estimations is responsible for the low coverage. We compute absolute biases, variances, and coverages of $\tau^\text{TI}$'s with different mean squared errors $\hat{\EE}[(Q-\hat{Q})^2]$ by using different numbers of datasets. According to \Cref{fig:low_coverage_bt_1_bc_50_gamma_4}--\Cref{fig:low_coverage_bt_1_bc_100_gamma_4}, as the mean squared error of $Q$ increases, the bias of $\tau^\text{TI}$ grows and the coverage of $\tau^\text{TI}$ drops. Specifically, the highest coverage of each setting is almost 95\% (use 50 datasets with most accurate conditional outcome estimations). In practice, one direct way to improve the TI estimator's accuracy is to apply better NLP models so that more accurate conditional outcome estimations can be obtained.

\begin{figure}[t]
    \centering
    \includegraphics[width=0.95\textwidth]{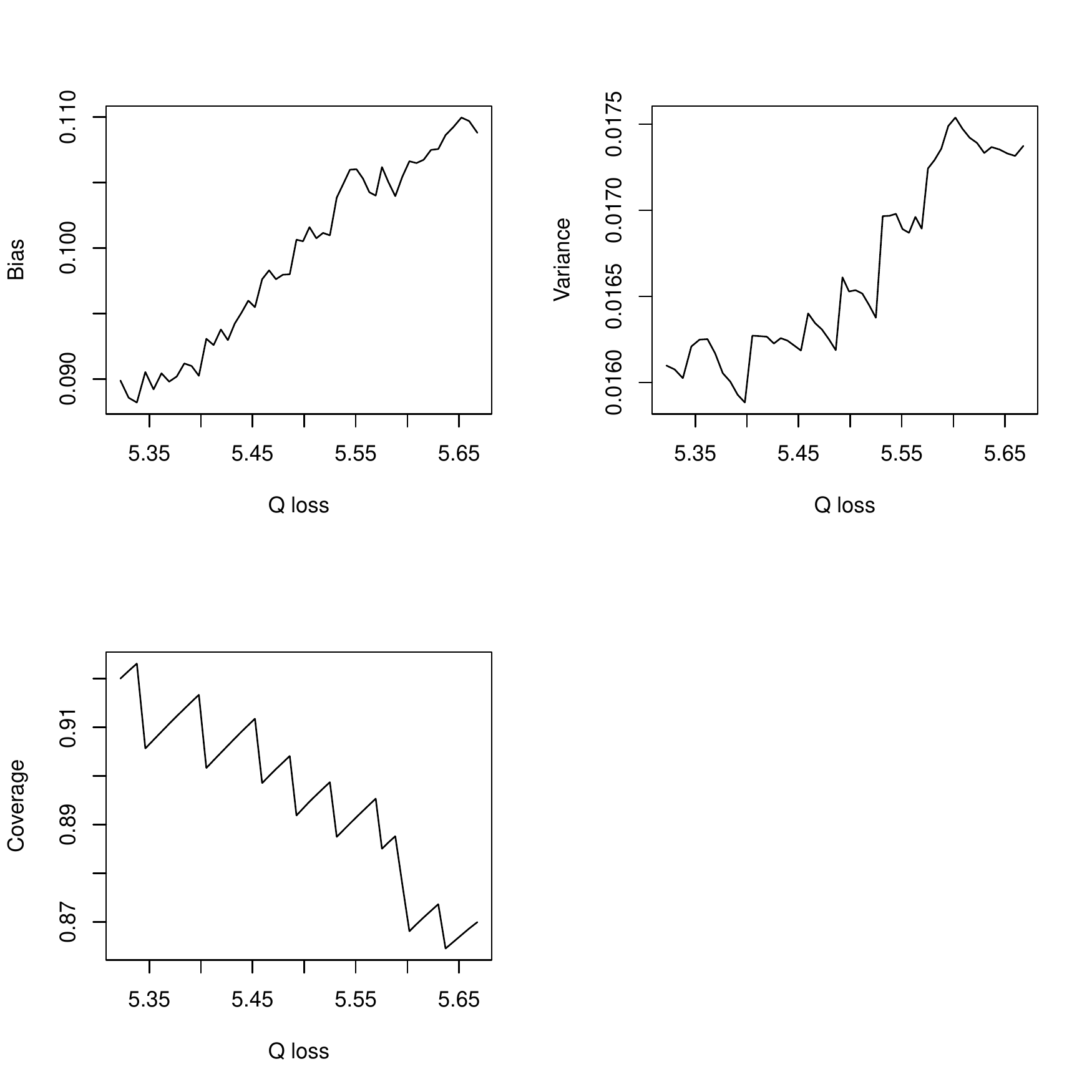} 
    \caption{Absolute biases and variances increase while coverages decrease as the mean squared errors of $Q$ (Q loss) becomes larger. This experiment uses 100 datasets with $\beta_t=1$ (with causal effect), $\beta_c=50.0$ (low confounding), and $\gamma=4.0$ (high noise).}
\label{fig:low_coverage_bt_1_bc_50_gamma_4}
\end{figure}

\begin{figure}[t]
    \centering
    \includegraphics[width=0.95\textwidth]{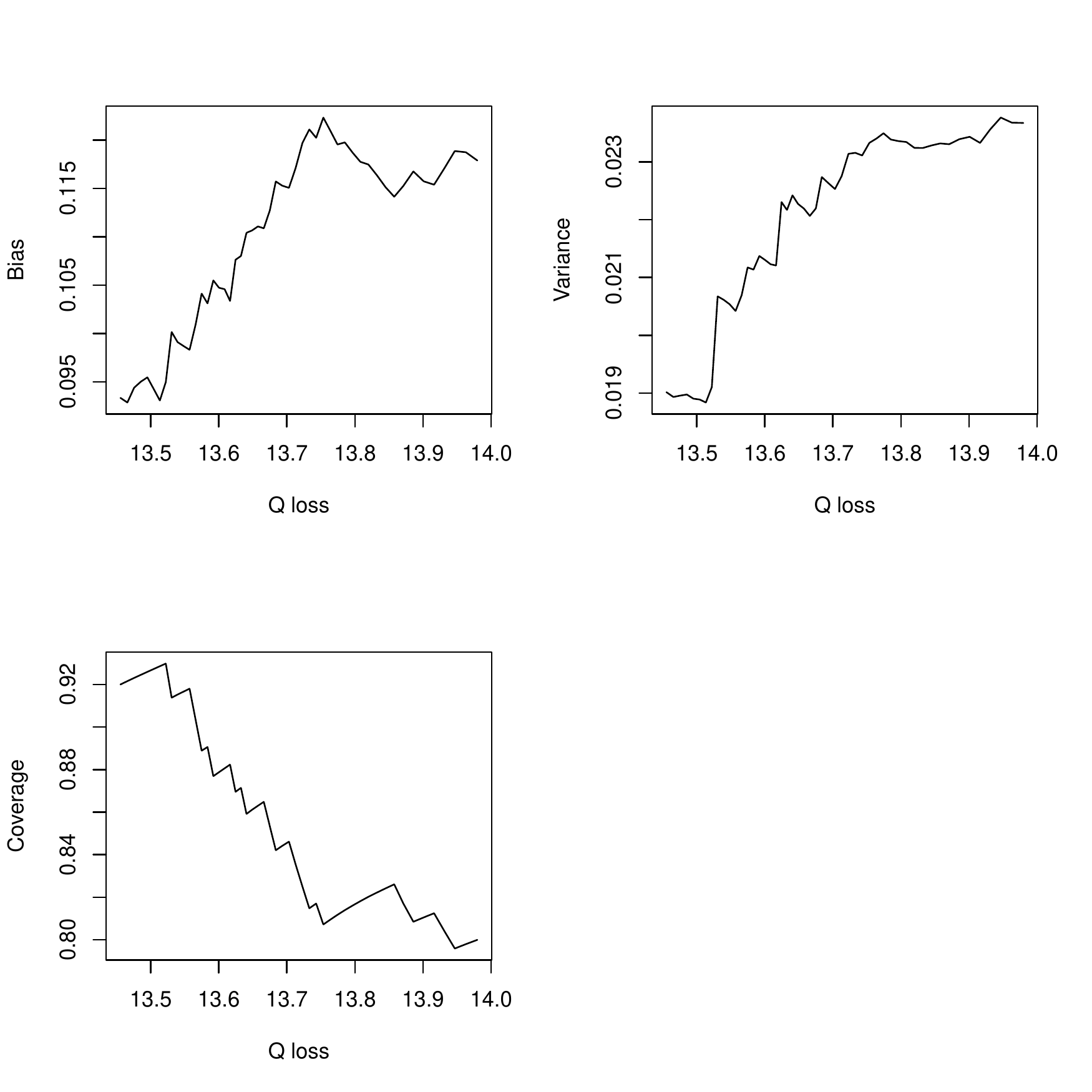} 
    \caption{Absolute biases and variances increase while coverages decrease as the mean squared errors of $Q$ becomes larger. This experiment uses 100 datasets with $\beta_t=1$ (with causal effect), $\beta_c=100.0$ (high confounding), and $\gamma=4.0$ (high noise).}
\label{fig:low_coverage_bt_1_bc_100_gamma_4}
\end{figure}

\end{document}